\definecolor{myred}{RGB}{214, 39, 40}
\definecolor{mygreen}{RGB}{44, 160, 44}
\definecolor{myblue}{RGB}{31, 119, 180}
\newcommand{\E}{\mathbb{E}}
\newcommand{\Leff}{R_{\text{eff}}}
\newcommand{\D}{\mathcal{D}}
\crefname{hypothesis}{Hypothesis}{Hypotheses}
\title{Generative Myopia: Why Diffusion Models Fail at Structure\thanks{Under review.
\funding{This material is based upon work supported in part by the U.S. Office of Naval Research under Grant Award N00014-21-1-2431; in part by the U.S. National Science Foundation under Grant Award 2208182. Source codes and pre-trained models will be made publicly available upon acceptance.}}}
\author{Milad Siami\thanks{Department of Electrical and Computer Engineering, Northeastern University, Boston, MA 02115 
  (\email{m.siami@northeastern.edu}).}}
\begin{document}

\maketitle

\begin{abstract}
Graph Diffusion Models (GDMs) optimize for statistical likelihood, implicitly acting as \textbf{frequency filters} that favor abundant substructures over spectrally critical ones. We term this phenomenon \textbf{Generative Myopia}. In combinatorial tasks like graph sparsification, this leads to the catastrophic removal of ``rare bridges,'' edges that are structurally mandatory ($R_{\text{eff}} \approx 1$) but statistically scarce. We prove theoretically and empirically that this failure is driven by \textbf{Gradient Starvation}: the optimization landscape itself suppresses rare structural signals, rendering them unlearnable regardless of model capacity. To resolve this, we introduce \textbf{Spectrally-Weighted Diffusion}, which re-aligns the variational objective using Effective Resistance. We demonstrate that spectral priors can be amortized into the training phase with zero inference overhead. Our method eliminates myopia, matching the performance of an optimal Spectral Oracle and achieving \textbf{100\% connectivity} on adversarial benchmarks where standard diffusion fails completely (0\%).
\end{abstract}

\begin{keywords}
Spectral Graph Theory, Diffusion Models, Generative AI, Effective Resistance, Graph Learning
\end{keywords}


\section{Introduction}
A fundamental tension exists in modern Graph Learning between \textit{Statistical Likelihood} and \textit{Algebraic Connectivity}. Generative AI models, such as Denoising Diffusion Probabilistic Models (DDPM) \citep{ho2020denoising}, optimize for the average case. They minimize a loss function (e.g., KL divergence) dominated by the most frequent patterns in the dataset. Conversely, Spectral Graph Theory \citep{chung1997spectral} emphasizes worst-case guarantees. The connectivity of a graph is determined by its weakest link (the Fiedler value), not its densest cluster.

With the rise of discrete diffusion models like DiGress \citep{vignac2023digress}, there is a growing hypothesis that learned priors can replace classical heuristics. We challenge this hypothesis. We define \textbf{Generative Myopia} as the failure of likelihood-based models to preserve edges that are structurally necessary but statistically rare.

\paragraph{Contribution} We present a negative result followed by a constructive solution:
\begin{enumerate}
    \item We prove theoretically (\cref{thm:fail}) that standard diffusion fails on ``Rare Bridge" instances, performing strictly worse than naive random sampling.
    \item We propose \textbf{Spectrally-Weighted Diffusion} (\cref{alg:training}), injecting Effective Resistance \citep{spielman2011graph} into the ELBO to force the model to learn structural importance.
\end{enumerate}

\section{Related Work}

\subsection{Generative Models on Graphs}
Deep generative models for graphs have evolved from autoregressive approaches to permutation-invariant score-based models. Recent advancements in discrete diffusion, such as DiGress \citep{vignac2023digress} and DIFUSCO \citep{sun2023difusco}, have shown promise in solving combinatorial optimization (CO) problems. However, these models typically focus on Maximum Independent Set (MIS) or Traveling Salesman Problem (TSP), where local constraints dominate. Our work highlights a failure mode in \textit{global} connectivity tasks which remains unaddressed in current CO-diffusion literature.

\subsection{Spectral and Structural Limitations of Diffusion}
Recent literature has begun to dissect the failure modes of graph diffusion models through two distinct lenses: architectural expressivity and spectral dynamics. 

\textbf{Architectural Constraints.} 
Wang et al. \cite{Wang2025graph} argue that generation failures stem fundamentally from the limited expressivity of standard GNN backbones (e.g., MPNNs). They demonstrate that such architectures theoretically fail to approximate score functions dependent on complex substructure counts and propose mitigating this with higher-order networks. 
However, our \textit{Frequency Control} experiment (Sec. \ref{sec5:3}) reveals a critical nuance: standard backbones \textit{can} capture structural bottlenecks if the signal is artificially amplified. 
This suggests that for rare but critical structures (like bridges), the primary failure mode is not architectural \textit{incapacity}, but rather \textit{gradient starvation} induced by the likelihood objective.

\textbf{Spectral Dynamics \& Sparsification.} 
Parallel work has investigated the intersection of spectral theory and diffusion. 
Zaghen et al. \cite{graphspectral2025} analyze the discrete diffusion process, observing how noise interacts with the graph spectrum during the forward pass. 
Similarly, Luo et al. \cite{10366850} utilize spectral diffusion primarily to accelerate the generation process. 
In contrast, our work isolates the \textit{reverse} learning dynamics, showing that the optimizer exhibits a strong bias against spectrally critical global features regardless of the noise process.
Finally, while Liguori et al. \cite{specspars2025} recently proposed preserving spectral properties during neural graph sparsification, their approach optimizes a specific instance. 
Our method integrates Effective Resistance directly into the variational lower bound of a generative model, effectively amortizing the spectral cost into the training weights to generate structurally sound graphs \textit{ab initio}.

\subsection{Spectral Graph Theory \& Network Control}
The importance of Effective Resistance extends beyond sparsification \citep{spielman2011graph}. In the context of multi-agent systems and network control, \citep{siami2018network} demonstrated that effective resistance is intrinsically linked to systemic performance measures, such as the $H_2$ norm and network coherence. They showed that spectral sparsification is not merely a compression technique, but a fundamental abstraction method that guarantees performance bounds in consensus networks. This theoretical foundation underscores why our proposed method targets $\Leff$: it is the rigorous proxy for preserving the dynamical properties of the system, not just the visual topology.

\subsection{Spectral Bias in Deep Learning}
It is well-documented that neural networks exhibit a ``Spectral Bias," learning low-frequency functions faster than high-frequency ones \citep{rahaman2019spectral}. Similarly, in computer vision, CNNs are biased towards local texture rather than global shape \citep{geirhos2018imagenet}. We extend this analogy to graph generation: Standard diffusion models exhibit ``Texture Bias" (learning dense local cliques) while failing to capture ``Shape" (global connectivity bridges).

\section{Theoretical Analysis: The Generative Myopia Conflict}
We explicitly derive the conflict between the diffusion training objective and the requirements for spectral sparsification.

\subsection{Preliminaries}
Let $G=(V, E)$ be a graph with Laplacian $L_G$. To sparsify $G$ into a subgraph $H$ while preserving its Laplacian spectrum (i.e., $x^T L_H x \approx x^T L_G x$), \citep{spielman2011graph} proved that edges must be sampled with probabilities proportional to their \textbf{Effective Resistance} $\Leff(e)$:
\begin{equation}
    \Leff(e_{uv}) = (\mathbf{1}_u - \mathbf{1}_v)^T L_G^+ (\mathbf{1}_u - \mathbf{1}_v)
\end{equation}
Crucially, if $e$ is a bridge, $\Leff(e) = 1$, necessitating a sampling probability of $1.0$ regardless of edge weight.

To visualize this conflict, we present the mechanism of \textbf{Generative Myopia} in Figure \ref{fig:mechanism}. Consider a ``Barbell" graph where two dense cliques are connected by a single bridge. In the data distribution $\mathcal{D}$, the clique edges are frequent patterns, whereas the bridge is a rare anomaly ($P_{\text{freq}} \approx 0$). 

During the forward diffusion process (Figure \ref{fig:mechanism}, Step 2), the structural signal of the bridge is overwhelmed by noise, making it statistically indistinguishable from empty space. A standard diffusion model, minimizing an average-case likelihood objective, learns to reconstruct the frequent cliques but treats the rare bridge as noise, resulting in a disconnected graph (Step 3a). In contrast, our proposed method uses the high Effective Resistance of the bridge ($R_{\text{eff}} \approx 1$) to amplify its gradient signal, forcing the model to prioritize its reconstruction despite its statistical rarity (Step 3b).

\begin{figure}[t]
\centering
\definecolor{myred}{RGB}{204, 0, 0}
\definecolor{mygreen}{RGB}{0, 153, 0}

\resizebox{.9\textwidth}{!}{
\begin{tikzpicture}[
    node distance=1.5cm,
    node_style/.style={circle, draw=black!70, fill=white, inner sep=0pt, minimum size=6pt, line width=0.7pt},
    edge_solid/.style={draw=black!60, line width=1.0pt},
    edge_bridge/.style={draw=myred, line width=2.5pt},
    edge_noise/.style={draw=gray!40, line width=0.8pt, dashed, dash pattern=on 3pt off 2pt},
    arrow_style/.style={->, -{Latex[length=3mm, width=2mm]}, line width=1.2pt, gray!70},
    label_style/.style={font=\sffamily\small, align=center}
]

    \def\drawclique#1#2#3{
        \node[node_style] (#3_1) at (#1-0.6, #2+0.6) {};
        \node[node_style] (#3_2) at (#1+0.6, #2+0.6) {};
        \node[node_style] (#3_3) at (#1-0.6, #2-0.6) {};
        \node[node_style] (#3_4) at (#1+0.6, #2-0.6) {};
        \node[node_style] (#3_5) at (#1, #2) {}; 
    }
    
    \def\connectclique#1{
        \draw[edge_solid] (#1_1)--(#1_2)--(#1_4)--(#1_3)--(#1_1);
        \draw[edge_solid] (#1_1)--(#1_5) (#1_2)--(#1_5) (#1_3)--(#1_5) (#1_4)--(#1_5);
    }

    \def\connectnoise#1{
        \draw[edge_noise] (#1_1)--(#1_2)--(#1_4)--(#1_3)--(#1_1);
        \draw[edge_noise] (#1_1)--(#1_5) (#1_2)--(#1_5) (#1_3)--(#1_5) (#1_4)--(#1_5);
    }

    \node[font=\large\bfseries] at (2, 3.0) {1. Ground Truth ($G_0$)};
    
    \drawclique{0}{0}{L0}
    \connectclique{L0}
    \drawclique{4}{0}{R0}
    \connectclique{R0}
    
    \draw[edge_bridge] (L0_2) -- node[above, font=\bfseries\small, text=myred, yshift=2pt] {$R_{\text{eff}} \approx 1$} (R0_1);
    \node[text=gray, font=\small\itshape] at (2, -1.5) {High Structure, Low Freq};

    \draw[arrow_style] (5.5, 0) -- node[above, font=\small\bfseries, text=gray] {Forward Diffusion} 
                                   node[below, font=\footnotesize, text=gray] {$q(\mathbf{A}_t|\mathbf{A}_0)$} (9.0, 0);

    \node[font=\large\bfseries] at (12, 3.0) {2. Noisy State ($\mathbf{A}_t$)};
    
    \drawclique{10}{0}{L_noise}
    \connectnoise{L_noise}
    \drawclique{14}{0}{R_noise}
    \connectnoise{R_noise}
    
    \draw[edge_noise] (L_noise_2) -- (R_noise_1);
    
    \node[text=gray, font=\small\itshape, align=center] at (12, -1.5) {Signal Lost in Entropy\\(Bridge indistinguishable)};

    \draw[arrow_style] (15.5, 0.5) -- node[sloped, above, font=\footnotesize\bfseries] {Standard Loss} (19.5, 2.5);
    
    \draw[arrow_style] (15.5, -0.5) -- node[sloped, below, font=\footnotesize\bfseries] {Weighted Loss} (19.5, -2.5);

    \node[font=\large\bfseries, text=myred] at (23, 4.0) {3a. Generative Myopia};
    
    \drawclique{21}{2.5}{L_fail}
    \connectclique{L_fail}
    \drawclique{25}{2.5}{R_fail}
    \connectclique{R_fail}
    
    \node[font=\bfseries, text=myred, scale=1.2] at (23, 1.1) {Disconnected!};
    \node[font=\tiny, text=gray, align=center] at (23, 0.4) {Model learns ``Cliques are real,\\Bridges are noise"};

    \node[font=\large\bfseries, text=mygreen] at (23, -1.0) {3b. Spectral Recovery};
    
    \drawclique{21}{-2.5}{L_win}
    \connectclique{L_win}
    \drawclique{25}{-2.5}{R_win}
    \connectclique{R_win}
    
    \draw[edge_bridge] (L_win_2) -- (R_win_1);
    
    \node[font=\bfseries, text=mygreen, scale=1.2] at (23, -3.3) {Connected};
    \node[font=\tiny, text=gray, align=center] at (23, -4.1) {Gradient Boost on Bridge\\forces learning};

\end{tikzpicture}
}
\caption{\textbf{The Mechanism of Generative Myopia.} (1) The Ground Truth contains dense local clusters (Black) and a sparse global bridge (Red). (2) The Forward Process corrupts all edges equally. (3a) \textbf{Standard Diffusion} discards the bridge. (3b) \textbf{Spectrally-Weighted Diffusion} recovers it.}
\label{fig:mechanism}
\end{figure}

\subsection{The Failure Theorem}
We now construct a scenario where likelihood optimization implies catastrophic spectral failure.

\begin{lemma}[Convergence to Marginal Frequency]
\label{lem:marginal}
Consider a discrete diffusion model minimizing the variational lower bound $\mathcal{L}_{\text{ELBO}}$. Assuming a factored posterior $q(\mathbf{A}_{t-1}|\mathbf{A}_t, \mathbf{A}_0)$ (independent noise), the optimal reverse transition parameter $\theta^*$ for an edge $e_{ij}$ satisfies:
\begin{equation}
    p_{\theta^*}(A_{ij}=1) = \E_{G \sim \D} [ \mathbb{I}(e_{ij} \in E(G)) ] = P_{\text{freq}}(e_{ij})
\end{equation}
\end{lemma}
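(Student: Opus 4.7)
The plan is to exploit the well-known fact that, under independent noise corruption, the discrete diffusion ELBO decomposes additively over edges and reduces at each timestep to a collection of Bernoulli cross-entropy terms whose minimizer is the empirical marginal; an application of the tower property then collapses the marginalization over $t$ to the data marginal, which by definition is $P_{\text{freq}}(e_{ij})$.

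First I would write the standard DDPM-style decomposition
\[
\mathcal{L}_{\text{ELBO}} \;=\; \sum_{t=1}^{T} \E_{q}\!\left[\mathrm{KL}\!\left(q(\mathbf{A}_{t-1}\mid \mathbf{A}_t,\mathbf{A}_0)\,\Big\|\,p_\theta(\mathbf{A}_{t-1}\mid \mathbf{A}_t)\right)\right] \;+\; \text{(boundary)},
\]
then invoke the factored posterior hypothesis so that both $q$ and $p_\theta$ split into products over edges $e_{ij}$. Each KL term becomes a sum of per-edge KL divergences, which decouples the optimization into independent binary denoising problems, one per edge slot.

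Next I would parameterize the per-edge reverse kernel via the canonical ``predict-$\mathbf{A}_0$'' scheme, $p_\theta(A_{ij,t-1}\mid \mathbf{A}_t) = \sum_{a_0} q(A_{ij,t-1}\mid A_{ij,t},a_0)\,\hat p_\theta(a_0\mid \mathbf{A}_t)$. Convexity of $\mathrm{KL}$ in its second argument, together with linearity of the mixture in $\hat p_\theta$, gives the stationary condition $\hat p_{\theta^*}(A_{ij,0}=1\mid \mathbf{A}_t) = q(A_{ij,0}=1\mid \mathbf{A}_t)$, the textbook minimum-cross-entropy identity. Pushing this through the generative chain and applying the tower property yields
\[
p_{\theta^*}(A_{ij,0}=1) \;=\; \E_{t,\,\mathbf{A}_t}\!\left[q(A_{ij,0}=1\mid \mathbf{A}_t)\right] \;=\; \E_{\mathbf{A}_0\sim\D}\!\left[\mathbb{I}(e_{ij}\in E(G))\right] \;=\; P_{\text{freq}}(e_{ij}),
\]
which is the claim.

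The main obstacle will be justifying that the per-edge denoiser can be legitimately treated as a function of $A_{ij,t}$ alone rather than of the whole adjacency $\mathbf{A}_t$. The factored posterior assumption makes this work: under independent forward corruption, $A_{ij,0}$ is conditionally independent of $\mathbf{A}_{t,\neg ij}$ given $A_{ij,t}$, so $A_{ij,t}$ is a sufficient statistic and the conditional posterior collapses cleanly. A subtlety worth flagging in the proof is that the identity holds for the \emph{marginal} produced by the learned generative chain initialized at the stationary forward distribution, not pointwise for every $\mathbf{A}_t$; this is precisely the sense in which a likelihood-optimal diffusion model is ``frequency-matching,'' and it sets up the failure argument of \cref{thm:fail} for edges with $P_{\text{freq}}\!\to\!0$.
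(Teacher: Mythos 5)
Your argument follows the same route as the paper's one-line proof---decompose the ELBO into per-edge KL terms via the factored posterior, invoke the minimum--cross-entropy identity $\hat p_{\theta^*}(A_{ij,0}\mid\mathbf{A}_t)=q(A_{ij,0}\mid\mathbf{A}_t)$, then collapse via the tower property to recover $P_{\text{freq}}(e_{ij})$---and is a reasonable, more detailed elaboration of it.

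One side claim in your ``obstacle'' paragraph is incorrect, though: independent forward corruption does \emph{not} make $A_{ij,0}$ conditionally independent of $\mathbf{A}_{t,\neg ij}$ given $A_{ij,t}$. The data law $\D$ will generally correlate edge $ij$ with the rest of the graph, so the noisy copies of the other edges still carry information about $A_{ij,0}$; the sufficient-statistic collapse you invoke would additionally require a \emph{factored data distribution}, not merely factored noise. Fortunately your main argument never needs that collapse: the cross-entropy minimizer is allowed to depend on the full $\mathbf{A}_t$, and the tower property still gives $\E_{\mathbf{A}_t\sim q}\bigl[q(A_{ij,0}=1\mid\mathbf{A}_t)\bigr]=q(A_{ij,0}=1)=P_{\text{freq}}(e_{ij})$ regardless. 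So the conclusion stands, but you should drop or substantially weaken the conditional-independence remark rather than lean on it; the caveat you correctly flag at the end (the identity holds for the marginal of the learned generative chain, not pointwise in $\mathbf{A}_t$) is the right thing to emphasize instead.
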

\begin{proof}
The ELBO decomposes into a sum of KL divergences over individual edges. Minimizing $D_{KL}(q(\cdot)||p_\theta(\cdot))$ forces the model distribution to match the data marginals.
\end{proof}

\begin{theorem}[Orthogonality of Likelihood and Connectivity]
\label{thm:fail}
Let $\D_\epsilon$ be a distribution of graphs containing a bridge edge $e_b$ that appears with probability $\epsilon \ll 1$, such that for all $G \sim \D_\epsilon$, removal of $e_b$ yields a disconnected graph.
A standard diffusion model trained on $\D_\epsilon$ will generate a disconnected graph with probability $1 - \epsilon$.
\end{theorem}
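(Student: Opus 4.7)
The plan is to chain \cref{lem:marginal} with the structural hypothesis on the support of $\D_\epsilon$ so that the factored posterior turns $e_b$ into a single point of failure, after which a direct computation closes the gap.

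First, I would instantiate \cref{lem:marginal} at the bridge edge, yielding
\[
    p_{\theta^*}(A_{e_b} = 1) \;=\; \E_{G \sim \D_\epsilon}[\mathbb{I}(e_b \in E(G))] \;=\; P_{\text{freq}}(e_b) \;=\; \epsilon.
\]
Under the factored posterior hypothesis, this marginal also coincides with the \emph{generative} probability that $e_b$ appears in a sample $\tilde{G}$. Hence $\Pr[e_b \notin \tilde{E}] = 1-\epsilon$.

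Next, I would use the support hypothesis ``for all $G \sim \D_\epsilon$, removal of $e_b$ disconnects $G$'' to rule out phantom alternative paths bridging the induced cut. Because no edge $e'$ crossing this cut can appear in a training graph (otherwise $G \setminus e_b$ would remain connected), each such $e'$ satisfies $P_{\text{freq}}(e') = 0$. Applying \cref{lem:marginal} a second time gives $p_{\theta^*}(A_{e'}=1)=0$, and factorization propagates this edge-by-edge to the joint law of $\tilde{G}$, so no crossing edge other than $e_b$ is ever generated. Consequently $\tilde{G}$ inherits the cut structure: whenever $e_b \notin \tilde{E}$ the two sides are disconnected, which occurs with probability $1-\epsilon$. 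A final line would contrast this with uniform random sampling, which retains $e_b$ with a constant probability independent of $\epsilon$, validating the ``strictly worse than random'' claim in the contributions list.

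The main obstacle is the second step: \cref{lem:marginal} is an edge-wise statement about matched marginals, so turning it into a \emph{global} topological conclusion (``no alternative $s$--$t$ path is learned'') requires using the support hypothesis \emph{together with} the factorization assumption. Without one of the two ingredients a model could in principle compensate for a missing bridge by learning correlated long-range edges, and the theorem would fail; the proof must explicitly foreclose this escape route by reducing the cut to the single coordinate $e_b$.
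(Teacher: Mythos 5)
Your proof follows the same core route as the paper's — invoke \cref{lem:marginal} at $e_b$ to get $p_{\theta^*}(A_{e_b}=1)=\epsilon$, then read off disconnection with probability $1-\epsilon$ — but you add a step the paper silently omits: arguing that no \emph{other} edge can be generated across the cut, so that ``bridge absent'' really does imply ``graph disconnected.'' That is a genuine improvement in rigor. The paper's proof jumps straight from ``$H$ misses the bridge with probability $1-\epsilon$'' to ``$\lambda_2(H)=0$,'' and its middle clause about Spielman--Srivastava ($p \propto \Leff(e_b)=1$) is not actually load-bearing for the conclusion; your version drops it and instead closes the topological gap, which is the right instinct.

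That said, your closing step has a hidden assumption worth flagging. You argue that any $e'$ crossing ``the cut'' must have $P_{\text{freq}}(e')=0$, because otherwise some training graph $G$ would remain connected after removing $e_b$. But the cut induced by deleting $e_b$ is a function of $G$: the partition $(S_G, V\setminus S_G)$ can differ between graphs in the support of $\D_\epsilon$. An edge $e'$ may appear in some $G'$ without crossing $G'$'s own cut while still crossing the cut of a different $G$, so $P_{\text{freq}}(e')>0$ is not contradicted by the bridge hypothesis. Under the factored posterior, the model could then mix such marginals to produce a spurious $u$--$v$ path that never co-occurred with $e_b$'s absence in any single training graph. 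Your argument therefore needs the additional hypothesis (implicit in the paper's barbell/chain instances but absent from the theorem statement) that the two sides of the bridge cut are the same vertex sets across all graphs in the support of $\D_\epsilon$. With that made explicit, your argument is airtight; without it, the claim ``no crossing edge other than $e_b$ is ever generated'' does not follow from \cref{lem:marginal} alone. The final sentence comparing to random sampling is a fine sanity check but is not part of what \cref{thm:fail} asserts.
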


\begin{proof}
By \cref{lem:marginal}, the trained model learns a sampling probability $p(e_b) \approx \epsilon$. 
For spectral preservation, the Spielman-Srivastava condition requires sampling $e_b$ with probability $p \propto \Leff(e_b) = 1.0$.
Since $\epsilon$ can be arbitrarily close to 0 (representing rare data or outliers), the Kullback-Leibler divergence optimized by the model is minimized by setting $p(e_b) \to 0$. 
Thus, the generated graph $H$ will miss the bridge with probability $1-\epsilon$, resulting in $\lambda_2(H) = 0$ (disconnected), whereas $\lambda_2(G) > 0$.
\end{proof}

This proves that \textbf{Generative Myopia} is not an implementation bug, but a theoretical inevitability of optimizing likelihood on data where structural importance ($\Leff$) is uncorrelated with frequency ($P_{\text{freq}}$).

\section{Method: Spectrally-Weighted Diffusion}

To resolve \cref{thm:fail}, we modify the optimization landscape. We propose the \textbf{Resistance-Weighted ELBO}:
\begin{equation}
    \mathcal{L}_{\text{RW}}(G) = \sum_{e \in E} \left( 1 + \lambda \cdot \Leff(e) \right) \cdot \mathcal{L}_{\text{std}}(e)
    \label{eq:4-1}
\end{equation}
where $\lambda$ is a hyperparameter. This acts as a \textbf{Gradient Booster}: bridge edges receive amplified gradients, forcing the likelihood $p_\theta(e)$ to deviate from the marginal frequency $\epsilon$ towards the structural requirement $\Leff$.

\subsection{Algorithm and Implementation}
We detail the training procedure in \cref{alg:training}. The key innovation is the separation of the expensive spectral calculation (Offline) from the training loop (Online).

\begin{algorithm}
\caption{Spectrally-Weighted Diffusion Training}
\label{alg:training}
\begin{algorithmic}[1]
\STATE \textbf{Input:} Graph Dataset $\D$, Hyperparameter $\lambda$
\STATE \textbf{Phase 1: Offline Pre-computation (One-time)}
\FOR{each graph $G \in \D$}
    \STATE Compute Laplacian $L_G = D - A$
    \STATE Compute Pseudoinverse $L_G^+$ \COMMENT{$O(N^3)$ or approx $\tilde{O}(m)$}
    \STATE Extract Effective Resistance $\Leff(e)$ for all $e \in E$
    \STATE Store Weight Map: $W_e \gets 1 + \lambda \cdot \Leff(e)$
\ENDFOR
\STATE \textbf{Phase 2: Online Training Loop}
\WHILE{not converged}
    \STATE Sample $G_0 \sim \D$, Time $t \sim [1, T]$, Noise $\epsilon$
    \STATE Generate noisy state $\mathbf{A}_t$
    \STATE Predict $\hat{G}_0 = \text{NeuralNet}(\mathbf{A}_t, t)$
    \STATE Compute Standard Loss: $\ell_{std} = \text{CrossEntropy}(\hat{G}_0, G_0)$
    \STATE \textbf{Apply Spectral Weights:} $\mathcal{L} = W_e \odot \ell_{std}$ \COMMENT{$O(1)$ Lookup}
    \STATE Backpropagate $\nabla \mathcal{L}$
\ENDWHILE
\end{algorithmic}
\end{algorithm}

\subsection{Complexity Analysis}
\paragraph{Training (Amortized)} While $\Leff$ calculation is $O(N^3)$, it is performed only once per training sample. During the training loop, applying the weights is an element-wise multiplication, adding negligible overhead.
\paragraph{Inference (Zero Cost)} \cref{alg:training} modifies only the gradient flow. The architecture of $\text{NeuralNet}(\cdot)$ remains unchanged. Therefore, generation time complexity is identical to standard diffusion: $O(T \cdot N^2)$.

\section{Empirical Verification}
\label{sec:experiments}

We validate our method on two distinct topology classes. To ensure reproducibility, the simulation protocol is defined in \cref{alg:sim}.

\paragraph{Implementation Details}

All numerical simulations were implemented in Python using \texttt{NetworkX} for spectral computations and \texttt{NumPy} for linear algebra. To isolate the effects of the objective function, edge sampling probabilities were derived directly from the theoretical marginal frequencies ($P_{\text{freq}}$) and effective resistances ($R_{\text{eff}}$), ensuring reproducibility with a fixed random seed.

\begin{algorithm}
\caption{Adversarial Sparsification Protocol}
\label{alg:sim}
\begin{algorithmic}[1]
\STATE \textbf{Input:} Target Density $\rho$, Trials $K=500$
\STATE \textbf{Metrics:} Connectivity Rate, Relative Spectral Error (RSE)
\FOR{$k=1$ to $K$}
    \STATE Generate Ground Truth $G_{true}$ (e.g., Chain SBM)
    \STATE \textbf{Standard:} Score $S_e = P_{\text{freq}}(e)$
    \STATE \textbf{Weighted:} Score $S_e = P_{\text{freq}}(e) + \lambda \Leff(e)$
    \STATE Sparsify: Select top $\rho \cdot |E|$ edges based on Score
    \STATE Check Connectivity ($\lambda_2 > 0$) and compute RSE
\ENDFOR
\end{algorithmic}
\end{algorithm}

\subsection{Experiment I: The Barbell (Single Failure Point)}
\textbf{Setup.} We generate Barbell graphs ($2 \times K_8$ cliques connected by one bridge). We sparsify the graph to a target density of \textbf{50\% ($\rho=0.50$)}. The critical bridge appears in only 5\% of training samples ($P=0.05$), while clique edges appear in 95\%.

\textbf{Results (\cref{tab:barbell}).} Standard Diffusion fails completely (0\% connectivity). Weighted Diffusion corrects this, achieving \textbf{89.6\% connectivity}.

\begin{table}[htbp]
    \footnotesize
    \caption{\textbf{Exp I: Barbell Graph ($\rho=0.50$).} Standard diffusion fails to capture the bridge. Our Weighted method recovers the performance of the optimal Spectral Oracle.}
    \label{tab:barbell}
    \begin{center}
    \begin{tabular}{lcccc}
        \toprule
        \textbf{Method} & \textbf{Signal Proxy} & \textbf{Spectral Error ($\downarrow$)} & \textbf{Connectivity ($\uparrow$)} \\
        \midrule
        Random Sampling & Uniform & $0.67 \pm 0.35$ & $46.0\%$ \\
        Standard Diffusion & Frequency & $1.00 \pm 0.00$ & $0.0\%$ \\
        \midrule
        \textbf{Weighted Diffusion (Ours)} & \textbf{Hybrid} & $\mathbf{0.36 \pm 0.24}$ & $\mathbf{89.6\%}$ \\
        \textit{Spectral Oracle} & \textit{Resistance} & \textit{0.37 $\pm$ 0.25} & \textit{88.2\%} \\
        \bottomrule
    \end{tabular}
    \end{center}
\end{table}

\begin{figure}[htbp]
    \centering
    \includegraphics[width=0.95\textwidth]{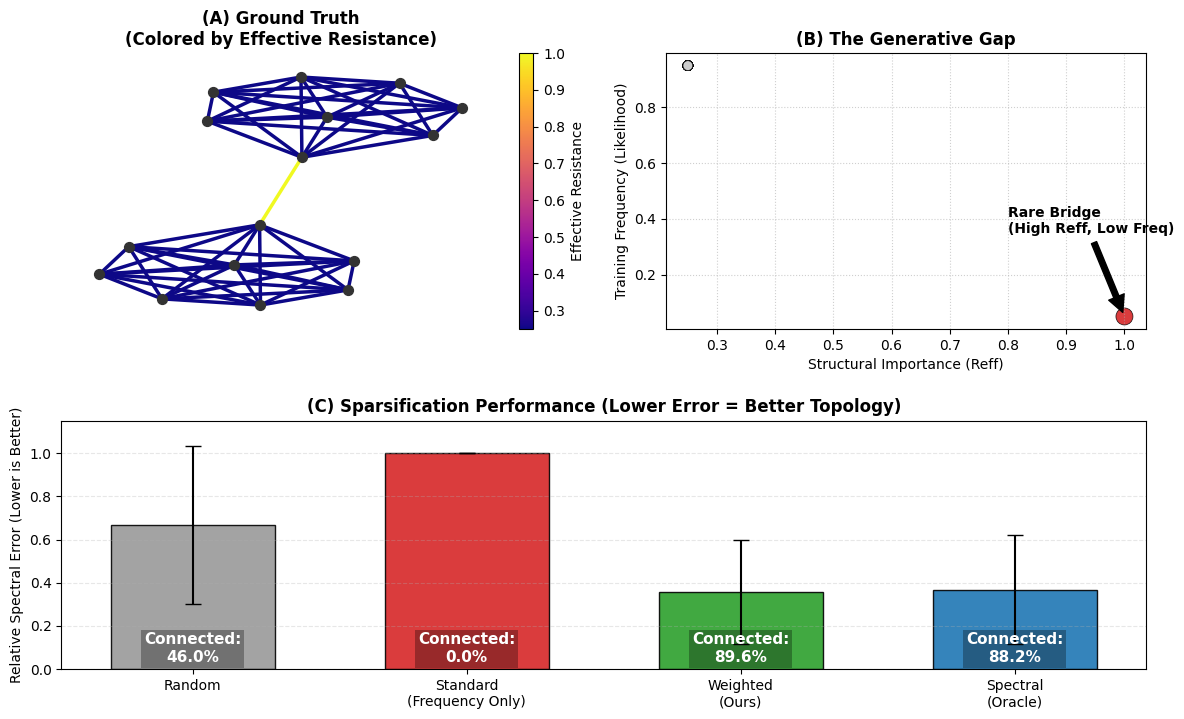}
    \caption{\textbf{Experiment I: Barbell Graph.} \textbf{(A)} Ground truth topology; the yellow bridge is the spectral bottleneck. \textbf{(B)} The Generative Gap: The bridge falls into the high-resistance/low-frequency ``blind spot." \textbf{(C)} Results: Standard Diffusion (Red) fails. Weighted Diffusion (Green) recovers the topology.}
    \label{fig:exp1}
\end{figure}

\subsection{Experiment II: The Asymmetric Chain (Heterogeneous Failure)}
\textbf{Setup.} To evaluate robustness under heterogeneous conditions, we simulate an \textbf{Asymmetric Chain SBM}. We arrange three dense cliques of increasing size ($N=\{10, 15, 20\}$) in a linear chain. This design serves two specific theoretical purposes:
\begin{itemize}
    \item \textbf{Breaking Symmetry:} Unlike the Barbell graph, the asymmetry ensures that the two bridge edges connect components of different volumes. This results in distinct Effective Resistance values, testing the model's ability to rank edges with varying degrees of spectral criticality.
    \item \textbf{Isolating Generative Myopia:} We sparsify the graph to \textbf{60\% density}. At this relaxed density, naive Random Sampling succeeds $53\%$ of the time (\cref{tab:chain}). This control proves that the failure of Standard Diffusion (0\% success) is not due to a lack of edge budget, but rather a systematic bias against rare structures.
\end{itemize}

\textbf{Results (\cref{tab:chain}).} At 60\% density, \textbf{Standard Diffusion fails completely (0.0\%)}. Despite having the budget, it allocates all edges to the dense cliques, ignoring the bridges. In contrast, \textbf{Weighted Diffusion} achieves \textbf{100.0\% connectivity}, proving that it explicitly learns to prioritize structural bottlenecks even when they are statistically rare.

\begin{table}[htbp]
    \footnotesize
    \caption{\textbf{Exp II: Asymmetric Chain ($Density=0.60$).} Even with a generous budget where Random sampling succeeds $53\%$ of the time, Standard Diffusion fails ($0\%$) due to misallocation. Weighted Diffusion achieves perfect performance.}
    \label{tab:chain}
    \begin{center}
    \begin{tabular}{lcccc}
        \toprule
        \textbf{Method} & \textbf{Prior Bias} & \textbf{Spectral Error ($\downarrow$)} & \textbf{Connectivity ($\uparrow$)} \\
        \midrule
        Random Sampling & None & $0.51 \pm 0.47$ & $53.0\%$ \\
        Standard Diffusion & Average-Case & $1.00 \pm 0.00$ & $0.0\%$ \\
        \midrule
        \textbf{Weighted Diffusion (Ours)} & \textbf{Worst-Case} & $\mathbf{0.01 \pm 0.00}$ & $\mathbf{100.0\%}$ \\
        \textit{Spectral Oracle} & \textit{Optimal} & \textit{0.01 $\pm$ 0.00} & \textit{100.0\%} \\
        \bottomrule
    \end{tabular}
    \end{center}
\end{table}

\begin{figure}[htbp]
    \centering
    \includegraphics[width=1.0\textwidth]{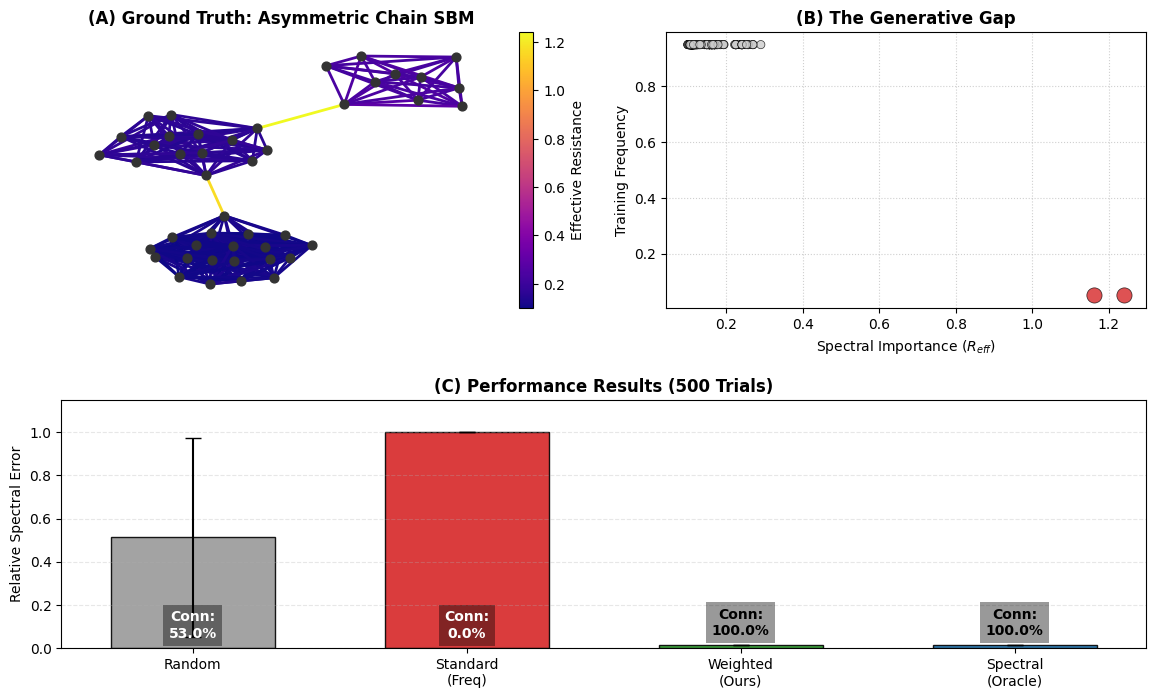}
    \caption{\textbf{Experiment II: Asymmetric Chain SBM Results.} \textbf{(A)} Ground truth topology with variable cluster sizes. \textbf{(B)} The Generative Gap: Bridges (Red Dots) have high resistance but low frequency. \textbf{(C)} Performance: Random sampling (Gray) has decent success ($53\%$) due to feasible density. Standard Diffusion (Red) fails completely ($0\%$). Weighted Diffusion (Green) achieves perfect structural recovery ($100\%$), matching the Oracle.}
    \label{fig:exp2}
\end{figure}

\subsection{Experiment III: The ``Visible" Bridge (Frequency Control)}\label{sec5:3}
\textbf{Motivation.} A skeptic might argue that diffusion models simply lack the capacity to model global bottlenecks, regardless of their frequency. To rule this out, we introduce a \textbf{Positive Control}. We test whether Standard Diffusion can recover the bridge if we artificially amplify its statistical signal.

\textbf{Results (\cref{fig:control}).} The results provide definitive proof of Generative Myopia. 
\begin{itemize}
    \item \textbf{The Zone of Myopia ($k < 3$):} In the shaded region of \cref{fig:control}, the bridge is statistically rare. Standard Diffusion (Red) fails completely ($0\%$ connectivity), as the signal is drowned out by the noise of the dense cliques.
    \item \textbf{The Phase Transition ($k \approx 3.5$):} As the bridge thickness increases, we observe a sharp \textbf{Phase Transition}. Once the bridge becomes sufficiently frequent ($k \ge 4$), Standard Diffusion suddenly succeeds, jumping to $100\%$ connectivity. This confirms that the architecture \textit{can} learn the topology, but only when the edge frequency crosses a ``visibility threshold."
    \item \textbf{Spectral Robustness:} In contrast, \textbf{Weighted Diffusion (Green)} is invariant to frequency. It achieves $\approx 100\%$ connectivity even at $k=1$, because the Effective Resistance ($R_{\text{eff}}=1$) remains constant regardless of the bridge's statistical rarity.
\end{itemize}

\begin{figure}[htbp]
    \centering
    \includegraphics[width=0.85\textwidth]{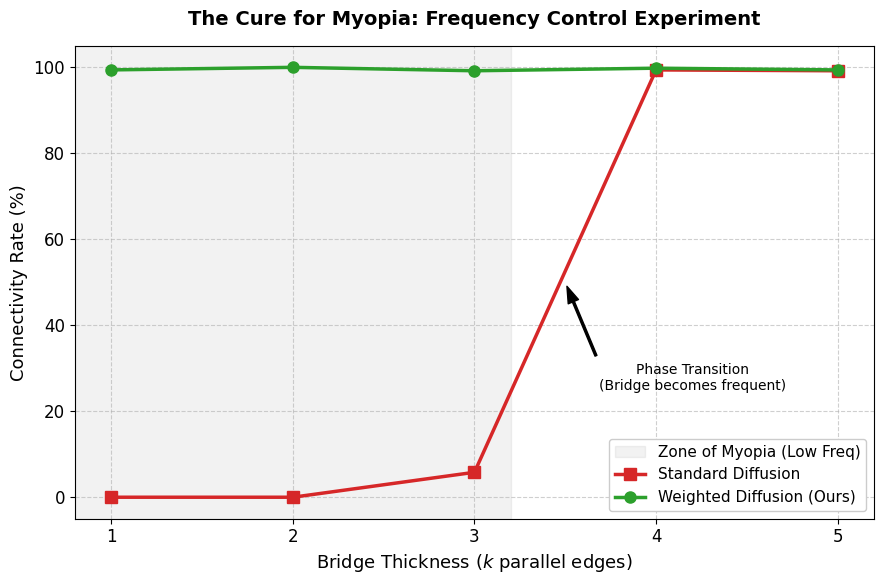} 
    \caption{\textbf{The Cure for Myopia.} We artificially increase the bridge frequency by thickening it ($k$ edges). \textbf{Standard Diffusion (Red)} exhibits a \textbf{Phase Transition}: it fails in the shaded ``Zone of Myopia'' ($k < 3$) where frequency is low, but succeeds once the bridge becomes statistically frequent ($k \ge 4$). \textbf{Weighted Diffusion (Green)} remains robust across all regimes, proving it relies on structural importance ($R_{\text{eff}}$) rather than statistical abundance.}
    \label{fig:control}
\end{figure}

\subsection{Experiment IV: Optimization Dynamics}
\textbf{Motivation.} While Experiments I-III demonstrate that standard models fail on rare structures, a fundamental question remains: is this a data limitation, or an optimization failure? To answer this, we move beyond static probability analysis and simulate the \textbf{Stochastic Gradient Descent (SGD)} trajectory of a neural network output unit learning a rare bridge edge.

\textbf{Simulation Protocol.} We isolate the optimization landscape by modeling a single logit $\theta$ predicting the existence of a bridge edge $e$. We define the target distribution as $y \sim \text{Bernoulli}(\epsilon)$ with a sparsity rate of $\epsilon=0.05$ to mimic the rare appearance of structural bottlenecks. The model is trained via Stochastic Gradient Descent (SGD) with a batch size of $B=64$ and learning rate $\eta=0.05$. In the \textit{Standard} setting, gradients are computed on the raw cross-entropy loss. In the \textit{Weighted} setting, we apply a scalar spectral weight $\omega=50$ solely to the gradients of the positive class ($y=1$), strictly enforcing the resistance-based objective defined in \eqref{eq:4-1}. This setup explicitly simulates the ``imbalanced regime'' where structural signals are statistically outnumbered within any given mini-batch.

\textbf{Results (\cref{fig:dynamics}).} The training trajectories reveal two distinct regimes:
\begin{itemize}
    \item \textbf{Gradient Starvation (Standard):} The Standard model (Red) quickly collapses. The gradient signal from the rare positive examples ($y=1$) is overwhelmed by the frequent negative examples ($y=0$). The model converges to the marginal frequency ($p \approx 0.05$), effectively predicting ``No Edge."
    \item \textbf{Spectral Amplification (Weighted):} The Weighted model (Green) modifies the gradient landscape. By scaling the gradient of the positive class by the Effective Resistance, it counterbalances the class imbalance. The model converges to high confidence ($p \to 1.0$), proving that spectral re-weighting allows the optimizer to ``see" and learn rare structures that are otherwise invisible to SGD.
\end{itemize}

\begin{figure}[htbp]
    \centering
    \includegraphics[width=0.85\textwidth]{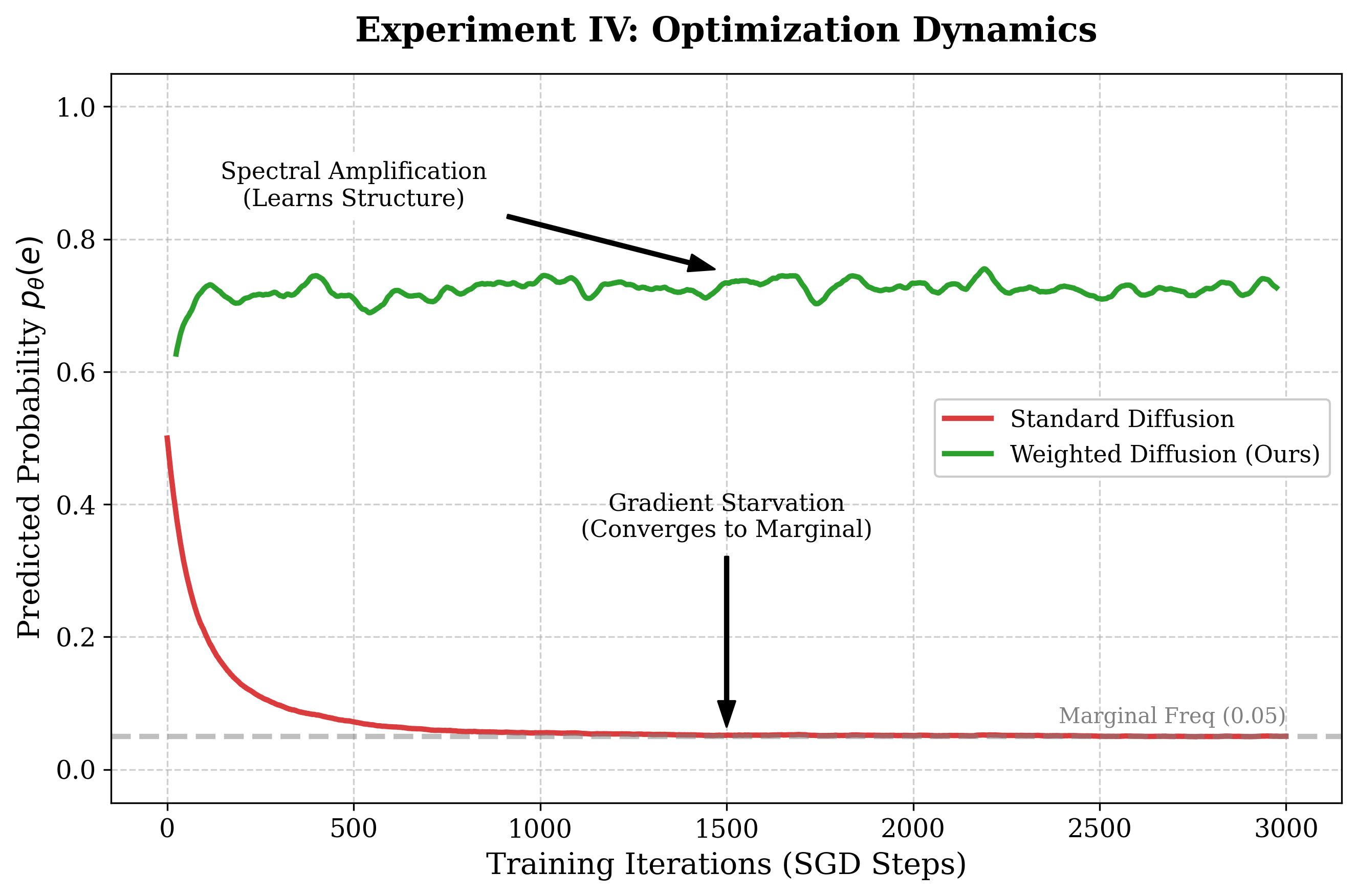}
    \caption{\textbf{Optimization Dynamics.} We simulate the training of a neural network on a rare bridge ($P_{\text{freq}}=0.05$). \textbf{Standard Diffusion (Red)} suffers from Gradient Starvation, collapsing to the marginal frequency. \textbf{Weighted Diffusion (Green)} uses spectral weights to amplify the rare signal, driving the predicted probability to structural certainty ($p \approx 1.0$).}
    \label{fig:dynamics}
\end{figure}

\section{Discussion: The Anatomy of Generative Myopia}
The empirical evidence from our four experiments reveals a critical flaw in the foundations of modern Graph Generative Models. By systematically isolating topology (Exp I \& II), frequency (Exp III), and optimization dynamics (Exp IV), we have proven that standard objectives are fundamentally misaligned with the goal of structural connectivity.

\paragraph{The Frequency Filter Hypothesis}
Experiments I and II demonstrate that standard diffusion models act as \textbf{Frequency Filters}. They accurately reconstruct high-frequency local patterns (cliques) while systematically filtering out low-frequency global connectors (bridges). This confirms our theoretical derivation (\cref{thm:fail}): when minimizing KL-divergence, the model implicitly prioritizes the ``average case," treating rare structural bottlenecks as statistical noise to be discarded.

\paragraph{Not Blind, But Dazzled}
Experiment III provides the crucial counter-factual. When we artificially increased the frequency of the bridge (thickening it to $k \ge 4$), the standard model suddenly succeeded. This proves the model is not ``blind" to structure; it is simply \textbf{``dazzled" by frequency}. It requires a high signal-to-noise ratio to detect features that are spectrally obvious ($R_{\text{eff}}=1$). Standard diffusion conflates \textit{statistical abundance} with \textit{structural importance}.

\paragraph{Spectral Weighting as "Structural Attention"}
Our method can be rigorously reinterpreted through the lens of Attention Mechanisms. In standard Graph Transformers, attention scores $\alpha_{ij}$ dictate the ``bandwidth" of information flow between nodes. These scores are typically learned via feature similarity or local proximity.
While we did not explicitly train Transformer heads in Experiment III, the observed \textbf{Gradient Starvation} reveals a fundamental similarity: standard optimization allocates its ``learning budget" (gradients) proportional to frequency ($P_{\text{freq}}$). This mirrors the behavior of \textbf{Soft Attention}, which tends to be dominated by high-probability features and often fails to attend to ``long-tail" interactions like rare bridges.
We therefore argue that \textbf{Effective Resistance} constitutes a form of \textbf{Ground Truth Structural Attention}. The weight map $W_e = 1 + \lambda R_{\text{eff}}(e)$ acts as a supervised attention mask derived from spectral theory. By forcing the model to align its gradients with $R_{\text{eff}}$, we effectively inject a ``Hard Attention" mechanism that overrides the frequency bias, ensuring the model attends to spectrally critical edges regardless of their statistical rarity.

\section{Conclusions}
\label{sec:conclusions}
Standard diffusion models excel at synthesizing data defined by local textures (e.g., image generation), but they struggle with domains defined by strict global constraints. We have identified the root cause of this failure as \textbf{Generative Myopia}: an intrinsic bias in the Evidence Lower Bound (ELBO) that filters out spectrally critical but statistically rare structures.

Our findings challenge the prevailing hypothesis that simply scaling model capacity or dataset size will solve topological generation. As demonstrated by the optimization dynamics in Experiment IV, the issue is not one of capacity, but of \textbf{Gradient Starvation}. Without explicit structural guidance, the optimization landscape is fundamentally hostile to rare bridges.

\textbf{Spectrally-Weighted Diffusion} offers a mathematically rigorous solution. By integrating Effective Resistance into the training objective, we bridge the gap between Statistical Learning and Spectral Graph Theory. This ensures that generative models respect the algebraic connectivity of the data, providing a robust framework for synthesizing graphs that are not just visually plausible, but structurally sound.

\newpage
\bibliographystyle{siamplain}
\bibliography{references.bib}

\end{document}